\newif\iftecrep\tecreptrue 
\newcommand{\eqn}[1]{\begin{align}#1\end{align}}
\newcommand{\eq}[1]{\begin{align*}#1\end{align*}}
\newcommand{\argmin}{\operatornamewithlimits{arg\,min}}
\newcommand{\proofof}{Proof of }
\newcommand{\qedifnone}{}
\newcommand{\proofof}{}
\newcommand{\qedifnone}{\qed}
\def\paradot#1{\vspace{1ex plus 0.5ex minus 0.5ex}\noindent{\bf\boldmath{#1.}}}
\newcommand{\ind}[1]{[\![ #1 ]\!]}
\newcommand{\R}[0]{\mathbb R}
\newcommand{\N}[0]{\mathbb N}
\theoremstyle{plain}
\newtheorem{theorem}{Theorem}
\newtheorem{proposition}{Proposition}
\newtheorem{lemma}[theorem]{Lemma}
\theoremstyle{definition}
\newtheorem{definition}[theorem]{Definition}
\newtheorem{example}[theorem]{Example}
\theoremstyle{remark}
\spnewtheorem{assumption}{Assumption}{\bfseries}{\itshape}
\newenvironment{keywords}{\centerline{\bf\small
Keywords}\begin{quote}\small}{\par\end{quote}\vskip 1ex}
\newcommand{\X}[0]{X}
\newcommand{\Y}[0]{{Y}}
\newcommand{\M}[0]{\mathcal M}
\newcommand{\KM}[0]{K\!M}
\newcommand{\Ms}[0]{\mathbf M}
\newcommand{\Mn}[0]{\mathbf M_{norm}}
\newcommand{\s}[0]{;}
\newcommand{\B}[0]{\mathcal B}
\newcommand{\test}[0]{{\X_u}}
\newcommand{\train}[0]{{\X_m}}
\begin{document}
\iftecrep
\author{Tor Lattimore \and Marcus Hutter}
\title{
\vskip 2mm\bf\Large\hrule height5pt \vskip 4mm
No Free Lunch versus Occam's Razor in Supervised Learning
\vskip 4mm \hrule height2pt}
\author{{\bf Tor Lattimore}$^1$ and {\bf Marcus Hutter}$^{1,2,3}$\\[3mm]
\normalsize Research School of Computer Science \\[-0.5ex]
\normalsize $^1$Australian National University and $^2$ETH Z{\"u}rich and $^3$NICTA \\[-0.5ex]
\normalsize\texttt{\{tor.lattimore,marcus.hutter\}@anu.edu.au}
}
\date{15 November 2011}

\else
\title{No Free Lunch versus Occam's Razor in Supervised Learning}
\author{Tor Lattimore$^1$ \and Marcus Hutter$^{1,2,3}$}
\institute{
$^1$Australian National University and $^2$ETH Z{\"u}rich and $^3$NICTA\\
\email{\{tor.lattimore,marcus.hutter\}@anu.edu.au}
}
\fi
\maketitle

\begin{abstract}
The No Free Lunch theorems are often used to argue that domain
specific knowledge is required to design successful algorithms.
We use algorithmic information theory to argue the case for a
universal bias allowing an algorithm to succeed in all
interesting problem domains. Additionally,  we give a new
algorithm for off-line classification, inspired by Solomonoff
induction, with good performance on all structured
(compressible) problems under reasonable assumptions. This
includes a proof of the efficacy of the well-known heuristic of
randomly selecting training data in the hope of reducing the
misclassification rate.
\iftecrep\def\contentsname{\centering\normalsize Contents}
{\parskip=-2.7ex\tableofcontents}\fi
\end{abstract}

\begin{keywords}
Supervised learning;
Kolmogorov complexity;
no free lunch;
Occam's razor.
\end{keywords}

\iftecrep\newpage\fi

\section{Introduction}

The No Free Lunch (NFL) theorems, stated and proven in various
settings and domains \cite{Sch94,wol01,wol97}, show that no
algorithm performs better than any other when their performance
is averaged uniformly over all possible problems of a
particular type.\footnote{Such results have been less formally
discussed long before by Watanabe in 1969 \cite{Sat69}.} These
are often cited to argue that algorithms must be designed for a
particular domain or style of problem, and that there is no
such thing as a general purpose algorithm.

On the other hand, Solomonoff induction \cite{sol64a,sol64b}
and the more general AIXI model \cite{Hut04} appear to
universally solve the sequence prediction and reinforcement
learning problems respectively. The key to the apparent
contradiction is that Solomonoff induction and AIXI do not
assume that each problem is equally likely. Instead they apply
a bias towards more structured problems. This bias is universal
in the sense that no class of structured problems is favored
over another. This approach is philosophically well justified
by Occam's razor.

The two classic domains for NFL theorems are optimisation and
classification. In this paper we will examine classification
and only remark that the case for optimisation is more complex.
This difference is due to the active nature of optimisation
where actions affect future observations.

Previously, some authors have argued that the NFL theorems do
not disprove the existence of universal algorithms for two
reasons.
\begin{enumerate}
\item That taking a uniform average is not philosophically
    the right thing to do, as argued informally in
    \cite{Gir}.
\item Carroll and Seppi in \cite{Car07} note that the NFL
    theorem measures performance as misclassification rate,
    where as in practise, the utility of a
    misclassification in one direction may be more costly
    than another.
\end{enumerate}
We restrict our consideration to the task of minimising the
misclassification rate while arguing more formally for a
non-uniform prior inspired by Occam's razor and formalised by
Kolmogorov complexity. We also show that there exist algorithms
(unfortunately only computable in the limit) with very good
properties on all structured classification problems.

The paper is structured as follows. First, the required
notation is introduced (Section 2). We then state the original
NFL theorem, give a brief introduction to Kolmogorov
complexity, and show that if a non-uniform prior inspired by
Occam's razor is used, then there exists a free lunch (Section
3). Finally, we give a new algorithm inspired by Solomonoff
induction with very attractive properties in the classification
problem (Section 4).

\section{Preliminaries}

Here we introduce the required notation and define the problem
setup for the No Free Lunch theorems.

\paradot{Strings}
A finite string $x$ over alphabet $\X$ is a finite sequence
$x_1x_2x_3\cdots x_{n-1}x_n$ with $x_i \in \X$. An infinite
string $x$ over alphabet $\X$ is an infinite sequence
$x_1x_2x_3\cdots$.
Alphabets are usually countable or finite, while in this paper
they will almost always be binary.
For finite strings we have a length function defined by
$\ell(x) := n$ for $x = x_1x_2\cdots x_n$. The empty string of
length $0$ is denoted by $\epsilon$.
The set $\X^n$ is the set of all strings of length $n$. The set
$\X^*$ is the set of all finite strings. The set $\X^\infty$ is
the set of all infinite strings.
Let $x$ be a string (finite or infinite) then substrings are
denoted $x_{s:t} := x_s x_{s+1}\cdots x_{t-1} x_{t}$ where $s
\leq t$. A useful shorthand is $x_{<t} := x_{1:t-1}$.
Let $x,y \in \X^*$ and $z \in \X^\infty$ with $\ell(x) = n$ and
$\ell(y) = m$ then \eq{
xy &:= x_1x_2,\cdots x_{n-1}x_{n}y_1y_2\cdots y_{m-1}y_m \\
xz &:= x_1x_2,\cdots x_{n-1}x_{n}z_1z_2z_3\cdots
}
As expected, $xy$ is finite and has length $\ell(xy) = n + m$
while $xz$ is infinite.
For binary strings, we write $\#1(x)$ and $\#0(x)$ to mean the
number of 0's and number of 1's in $x$ respectively.

\paradot{Classification}
Informally, a classification problem is the task of matching
features to class labels. For example, recognizing handwriting
where the features are images and the class labels are letters.
In supervised learning, it is (usually) unreasonable to expect
this to be possible without any examples of correct
classifications. This can be solved by providing a list of
feature/class label pairs representing the true classification
of each feature. It is hoped that these examples can be used to
generalize and correctly classify other features.

The following definitions formalize classification problems,
algorithms capable of solving them, as well as the loss
incurred by an algorithm when applied to a problem, or set of
problems. The setting is that of transductive learning as in
\cite{DEM04}.
\begin{definition}[Classification Problem]
Let $\X$ and $\Y$ be finite sets representing the feature space
and class labels respectively. A {\it classification problem}
over $\X,\Y$ is defined by a function $f:\X\to\Y$ where $f(x)$
is the true class label of feature $x$.
\end{definition}
In the handwriting example, $\X$ might be the set of all images
of a particular size and $\Y$ would be the set of
letters/numbers as well as a special symbol for images that
correspond to no letter/number.
\begin{definition}[Classification Algorithm]
Let $f$ be a classification problem and $\train \subseteq \X$
be the training features on which $f$ will be known. We write
$f_\train$ to represent the function $f_\train:\train \to \Y$
with $f_\train(x) := f(x)$ for all $x\in\train$. A {\it
classification algorithm} is a function, $A$, where
$A(f_\train, x)$ is its guess for the class label of feature
$x\in\test := \X - \train$ when given training data $f_\train$.
Note we implicitly assume that $\X$ and $\Y$ are known to the
algorithm.
\end{definition}

\begin{definition}[Loss function]
The loss of algorithm $A$, when applied to classification
problem $f$, with training data $\train$ is measured by
counting the proportion of misclassifications in the testing
data, $\test$.
\eq{
L_A(f, \train):= {1\over|\test|} \sum_{x \in \test} \ind{A(f_\train, x) \neq f(x)}
}
where $\ind{}$ is the indicator function defined by,
$\ind{expr} = 1$ if $expr$ is true and $0$ otherwise.
\end{definition}

We are interested in the expected loss of an algorithm on the
set of all problems where expectation is taken with respect to
some distribution $P$.
\begin{definition}[Expected loss]
Let $\M$ be the set of all functions from $\X$ to $\Y$ and $P$
be a probability distribution on $\M$. If $\train$ is the
training data then the expected loss of algorithm $A$ is
\eq{
L_A(P, \train) := \sum_{f \in \M} P(f) L_A(f, \train)
}
\end{definition}

\section{No Free Lunch Theorem}

We now use the above notation to give a version of the No Free
Lunch Theorem of which Wolpert's is a generalization.
\begin{theorem}[No Free Lunch]\label{thm-nfl1}
Let $P$ be the uniform distribution on $\M$. Then the following
holds for any algorithm $A$ and training data
$\train\subseteq\X$.
\eqn{ \label{eqn-loss}
L_A(P, \train) = |\Y - 1| / |\Y|
}
\end{theorem}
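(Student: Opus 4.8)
The plan is to expand the definition of expected loss, interchange the two finite sums, and reduce the statement to a counting argument at a single test point. Since $P$ is uniform, $P(f)=1/|\M|=1/|\Y|^{|\X|}$, and combining the definitions of $L_A(P,\train)$ and $L_A(f,\train)$ gives
$$
L_A(P,\train)=\frac{1}{|\test|}\sum_{x\in\test}\;\frac{1}{|\M|}\sum_{f\in\M}\ind{A(f_\train,x)\neq f(x)}.
$$
Because the outer average is over $\test$, it suffices to show that the inner quantity $\frac{1}{|\M|}\sum_{f\in\M}\ind{A(f_\train,x)\neq f(x)}$ equals $(|\Y|-1)/|\Y|$ for every fixed test feature $x\in\test$; the claim then follows immediately, the value being constant in $x$.

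The key observation is that the guess $A(f_\train,x)$ depends on $f$ only through its restriction $f_\train$ to the training set, never through $f(x)$ itself (recall $x\in\test=\X-\train$). I would therefore partition $\M$ by the pair $(f_\train,f(x))$. For each fixed restriction $g:\train\to\Y$ and each label $y\in\Y$, the number of $f\in\M$ with $f_\train=g$ and $f(x)=y$ is exactly $|\Y|^{|\X|-|\train|-1}$, since the values at the remaining $|\X|-|\train|-1$ features are unconstrained. Once $g$ is fixed the guess $A(g,x)$ is a single determined label in $\Y$, so as $y$ ranges over $\Y$ the indicator $\ind{A(g,x)\neq y}$ vanishes exactly once and equals $1$ for the other $|\Y|-1$ values.

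Summing over the $|\Y|^{|\train|}$ possible restrictions $g$ then gives the inner count $|\Y|^{|\train|}\cdot|\Y|^{|\X|-|\train|-1}(|\Y|-1)=|\Y|^{|\X|-1}(|\Y|-1)$, and dividing by $|\M|=|\Y|^{|\X|}$ yields $(|\Y|-1)/|\Y|$ as required. The only delicate point is the bookkeeping in this count, namely correctly tracking that the restriction $g$ contributes a free factor $|\Y|^{|\train|}$ while the label at $x$ is pinned and the other features remain free. Conceptually, the whole argument rests on a single fact: under the uniform prior the true label at any untrained feature is itself uniform on $\Y$ and independent of the training data, so on the test set every algorithm is reduced to blind guessing and cannot beat the baseline error $(|\Y|-1)/|\Y|$.
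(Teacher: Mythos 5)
Your proof is correct and follows essentially the same route as the paper: the paper's (sketched) proof rests on the observation that $P(f(x) = y \mid f_\train) = 1/|\Y|$ for every test point $x$, and your partition of $\M$ by the pair $(f_\train, f(x))$ is precisely the counting argument that formalizes this conditional-uniformity claim. You simply fill in the bookkeeping that the paper leaves implicit, so there is no substantive difference in approach.
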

The key to the proof is the following observation. Let $x \in
\test$, then for all $y \in \Y$, $P(f(x) = y | f|_\train) =
P(f(x) = y) = 1/|\Y|$. This means no information can be
inferred from the training data, which suggests no algorithm
can be better than random.

\paradot{Occam's razor/Kolmogorov complexity}
The theorem above is often used to argue that no general
purpose algorithm exists and that focus should be placed on
learning in specific domains.

The problem with the result is the underlying assumption that
$P$ is uniform, which implies that training data provides no
evidence about the true class labels of the test data. For
example, if we have classified the sky as blue for the last
1,000 years then a uniform assumption on the possible sky
colours over time would indicate that it is just as likely to
be green tomorrow as blue, a result that goes against all our
intuition.

How then, do we choose a more reasonable prior? Fortunately,
this question has already been answered heuristically by
experimental scientists who must endlessly choose between one
of a number of competing hypotheses. Given any experiment, it
is easy to construct a hypothesis that fits the data by using a
lookup table. However such hypotheses tend to have poor
predictive power compared to a simple alternative that also
matches the data. This is known as the principle of parsimony,
or Occam's razor, and suggests that simple hypotheses should be
given a greater weight than more complex ones.

Until recently, Occam's razor was only an informal heuristic.
This changed when Solomonoff, Kolmogorov and Chaitin
independently developed the field of algorithmic information
theory that allows for a formal definition of Occam's razor. We
give a brief overview here, while a more detailed introduction
can be found in \cite{LV08}. An in depth study of the
philosophy behind Occam's razor and its formalisation by
Kolmogorov complexity can be found in \cite{KLV97,HR11}. While
we believe Kolmogorov complexity is the most foundational
formalisation of Occam's razor, there have been other
approaches such as MML \cite{BW68} and MDL \cite{Gru07}. These
other techniques have the advantage of being computable (given
a computable prior) and so lend themselves to good practical
applications.

The idea of Kolmogorov complexity is to assign to each binary
string an integer valued {\it complexity} that represents the
length of its shortest description. Those strings with short
descriptions are considered simple, while strings with long
descriptions are complex. For example, the string consisting of
1,000,000 1's can easily be described as ``one million ones''.
On the other hand, to describe a string generated by tossing a
coin 1,000,000 times would likely require a description about
1,000,000 bits long. The key to formalising this intuition is
to choose a universal Turing machine as the language of
descriptions.

\begin{definition}[Kolmogorov Complexity]
Let $U$ be a universal Turing machine and $x \in \B^*$ be a
finite binary string. Then define the plain Kolmogorov
complexity $C(x)$ to be the length of the shortest program
(description) $p$ such that $U(p) = x$.
\eq{
C(x) := \min_{p\in\B^*} \left\{\ell(p) : U(p) = x\right\}
}
\end{definition}
It is easy to show that $C$ depends on choice of universal
Turing machine $U$ only up to a constant independent of $x$ and
so it is standard to choose an arbitrary {\it reference}
universal Turing machine.

For technical reasons it is difficult to use $C$ as a prior, so
Solomonoff introduced monotone machines to construct the
Solomonoff prior, $\Ms$. A monotone Turing machine has one
read-only input tape which may only be read from left to right
and one write-only output tape that may only be written to from
left to right. It has any number of working tapes. Let $T$ be
such a machine and write $T(p) = x$ to mean that after reading
$p$, $x$ is on the output tape. The machines are called
monotone because if $p$ is a prefix of $q$ then $T(p)$ is a
prefix of $T(q)$.
It is possible to show there exists a universal monotone Turing
machine $U$ and this is used to define monotone complexity $Km$
and Solomonoff's prior, $\Ms$.
\begin{definition}[Monotone Complexity]
Let $U$ be the reference universal monotone Turing machine then
define $Km$, $\Ms$ and $\KM$ as follows,
\eq{
Km(x) &:= \min\left\{\ell(p) : U(p) = x*\right\} \\
\Ms(x) &:= \sum_{U(p) = x*} 2^{-\ell(p)} \\
\KM(x)&:= -\log \Ms(x)
}
where $U(p) = x*$ means that when given input $p$, $U$ outputs
$x$ possibly followed by more bits.
\end{definition}
Some facts/notes follow.
\begin{enumerate}
\item For any $n$, $\sum_{x \in \B^n} \Ms(x) \leq 1$.
\item $Km$, $\Ms$ and $\KM$ are incomputable.
\item $0 < \KM(x) \approx Km(x) \approx C(x) < \ell(x) + O(1)$\footnote{
The approximation $C(x) \approx Km(x)$ is only accurate to $\log \ell(x)$, while $\KM \approx Km$ is almost always very close \cite{gacs83,Gacs08}. This is a little surprising since the sum in the definition of $\Ms$ contains $2^{-Km}$. It shows that there are only comparitively few
short programs for any $x$.
}
\end{enumerate}
To illustrate why $\Ms$ gives greater weight to simple $x$,
suppose $x$ is simple then there exists a relatively short
monotone Turing machine $p$, computing it. Therefore $Km(x)$ is
small and so $2^{-Km(x)} \approx \Ms(x)$ is relatively large.

Since $\Ms$ is a semi-measure rather than a proper measure, it
is not appropriate to use it in place of $P$ when computing
expected loss. However it can be normalized to a proper
measure, $\Mn$ defined inductively by
\eq{
\Mn(\epsilon) &:= 1 & \Mn(x b) &:= \Mn(x) {\Ms(x b) \over \Ms(x 0) + \Ms(x 1)}
}
Note that this normalisation is not unique, but is
philosophically and technically the most attractive and was
used and defended by Solomonoff. For a discussion of
normalisation, see \cite[p.303]{LV08}. The normalised version
satisfies $\sum_{x \in \B^n} \Mn(x) = 1$.

We will also need to define $\Ms/\KM$ with side information,
$\Ms(y \s x) := \Ms(y)$ where $x*$ is provided on a spare tape
of the universal Turing machine. Now define $\KM(y \s x) :=
-\log \Ms(y \s x)$.
This allows us to define the complexity of a function in terms
of its output relative to its input.
\begin{definition}[Complexity of a function]
Let $\X=\left\{x_1,\cdots,x_n\right\} \subseteq \B^k$ and
$f:\X\to\B$ then define the complexity of $f$, $\KM(f \s \X)$
by
\eq{
\KM(f \s \X) := \KM(f(x_1)f(x_2)\cdots f(x_n) \s x_1,x_2,\cdots ,x_n)
}
\end{definition}
An example is useful to illustrate why this is a good measure
of the complexity of $f$.
\begin{example}
Let $\X \subseteq \B^n$ for some $n$, and $\Y = \B$ and
$f:\X\to\Y$ be defined by $f(x) = \ind{x_n = 1}$. Now for a
complex $\X$, the string $f(x_1)f(x_2)\cdots $ might be
difficult to describe, but there is a very short program that
can output $f(x_1)f(x_2)\cdots$ when given $x_1x_2\cdots$ as
input. This gives the expected result that $\KM(f;\X)$ is very
small.
\end{example}

\paradot{Free lunch using Solomonoff prior}
We are now ready to use $\Mn$ as a prior on a problem family.
The following proposition shows that when problems are chosen
according to the Solomonoff prior that there is a (possibly
small) free lunch.

Before the proposition, we remark on problems with maximal
complexity, $\KM(f;\X) = O(|\X|)$. In this case $f$ exhibits no
structure allowing it to be compressed, which turns out to be
equivalent to being random in every intuitive sense
\cite{Lof66}. We do not believe such problems are any more
interesting than trying to predict random coin flips. Further,
the NFL theorems can be used to show that no algorithm can
learn the class of random problems by noting that almost all
problems are random. Thus a bias towards random problems is not
much of a bias (from uniform) at all, and so at most leads to a
decreasingly small free lunch as the number of problems
increases.

\begin{proposition}[Free lunch under Solomonoff prior]\label{prop_main}
Let $\Y = \B$ and fix a $k \in \N$. Now let $\X = \B^n$ and
$\train \subset \X$ such that $|\train| = 2^{n} - k$. For
sufficiently large $n$ there exists an algorithm $A$ such that
\eq{
L_A(\Mn, \train) < 1/2
}
\end{proposition}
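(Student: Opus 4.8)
The plan is to take $A$ to be the Bayes-optimal predictor $A^*$ with respect to $\Mn$ (for each test point, predict the label of larger $\Mn$-posterior), and to show that its entire advantage over $1/2$ can be extracted from a single, highly probable training pattern. First I would swap the order of summation in the expected loss and group functions by their restriction $g=f_\train$ to the training set, giving
\[
L_{A^*}(\Mn,\train)=\frac1{|\test|}\sum_{x\in\test}\sum_{g}\Mn(f_\train=g)\,\min\!\big(\Mn(f(x)=0\mid g),\,\Mn(f(x)=1\mid g)\big).
\]
Because $\Y=\B$, each inner term is at most $1/2$, with equality exactly when the posterior at $x$ is balanced; hence $L_{A^*}\le 1/2$ unconditionally, and it is enough to exhibit one training pattern $g$ of positive $\Mn$-probability under which some test posterior is bounded away from $1/2$. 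The natural candidate is the all-zeros labelling $g\equiv 0$.

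The crux is a quantitative statement that after a long run of zeros the Solomonoff measure predicts continuation with probability tending to one. Ordering the enumeration of $\X$ that defines the prior so that the $2^n-k$ training points come first, $f_\train\equiv 0$ corresponds to the prefix $0^{2^n-k}$ and the test points are the final $k$ bits. Writing $p_i:=\Mn(\text{bit }i=1\mid 0^{i-1})=\Ms(0^{i-1}1)/\big(\Ms(0^i)+\Ms(0^{i-1}1)\big)$, a fixed monotone program that prints zeros forever witnesses $\Ms(0^i)\ge c_0>0$ for all $i$, while the cylinders $[0^{i-1}1]$ are pairwise disjoint so that $\sum_i\Ms(0^{i-1}1)\le 1$; together these give $\sum_i p_i\le 1/c_0<\infty$. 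Summability makes the tail $\sum_{i> 2^n-k}p_i\to 0$ as $n\to\infty$ (for fixed $k$), so
\[
\Mn(f\equiv 0\mid f_\train\equiv 0)=\prod_{i=2^n-k+1}^{2^n}(1-p_i)\longrightarrow 1,
\]
and in particular exceeds $1/2$ for large $n$. Since the event $f(x)=0$ contains the all-zeros completion, every test posterior under $g\equiv 0$ then places mass strictly above $1/2$ on label $0$, so the inner loss term at $g\equiv 0$ is some $\ell_0<1/2$.

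Finally I would assemble the bound. The same summability yields $\Mn(f_\train\equiv 0)=\Mn(0^{2^n-k})=\prod_{i\le 2^n-k}(1-p_i)\ge\prod_{i\ge 1}(1-p_i)=:\delta>0$, a constant independent of $n$. Splitting the outer sum into the all-zeros pattern and the rest, bounding the rest by $1/2$ and the all-zeros contribution by $\ell_0$, gives
\[
L_{A^*}(\Mn,\train)\le \tfrac12\big(1-\Mn(f_\train\equiv 0)\big)+\ell_0\,\Mn(f_\train\equiv 0)=\tfrac12-\Mn(f_\train\equiv 0)\big(\tfrac12-\ell_0\big)\le \tfrac12-\delta\big(\tfrac12-\ell_0\big)<\tfrac12
\]
for all sufficiently large $n$. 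I expect the main obstacle to be the quantitative continuation estimate of the second paragraph: everything hinges on lower-bounding the \emph{normalized} measure $\Mn$ of a long zero-run by a positive constant and on showing the per-bit error probabilities $p_i$ are summable, which is precisely where the semimeasure properties of $\Ms$ and the normalization must be handled with care. The remaining points are routine: choosing the enumeration so that $\train$ is a prefix (so the tail argument applies), and noting that $A^*$, although generally incomputable, is admissible since the proposition asserts only existence of an algorithm.
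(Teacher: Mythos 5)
Your proof is correct, and at the skeleton level it matches the paper's: single out one constant labelling, show its $\Mn$-mass is bounded below by a constant $\delta>0$ independent of $n$, show that the mass of the other labellings consistent with that training pattern vanishes as $n\to\infty$, and bound the loss on everything else by $1/2$. The execution differs in ways worth noting. The paper uses an explicitly constructed algorithm (predict $1$ when the training labels are all ones, otherwise fall back on the better of the two constant classifiers, which by a small averaging lemma loses at most half the mass of $\M_0$), and then merely \emph{asserts} the two facts that make the argument work: that $\Mn(f_1)>\delta$ with $\delta$ independent of $n$, and that $\max_{f\in\M_1-\{f_1\}}\Mn(f)\to 0$ while $|\M_1|=2^k$ is fixed. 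You instead run the Bayes-optimal classifier, whose per-pattern decomposition gives the $\le 1/2$ bound on all other training patterns for free, and---this is the real added value---you actually prove the two facts the paper asserts: the program printing zeros gives $\Ms(0^i)\ge c_0$, the antichain $\{0^{i-1}1\}_i$ gives $\sum_i \Ms(0^{i-1}1)\le 1$, hence $\sum_i p_i\le 1/c_0<\infty$, which simultaneously yields $\Mn(0^{2^n-k})\ge\prod_{i\ge 1}(1-p_i)=\delta>0$ and the posterior concentration $\Mn(f\equiv 0\mid f_\train\equiv 0)\to 1$. (Your use of all-zeros where the paper uses all-ones is immaterial.) So your route is the same in outline but strictly more complete in substance, and it delivers a gap below $1/2$ that is quantitative and uniform in $n$, which the paper's sketch does not make explicit.

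One caveat you flag deserves emphasis, because it is not cosmetic: identifying $\M$ with binary strings of length $2^n$ requires an enumeration of $\X$, and your tail bound $\sum_{i>2^n-k}p_i\to 0$ needs the test points to occupy the \emph{last} $k$ positions. If instead the enumeration were fixed (say lexicographic) and some test point sat at a bounded position for every $n$---e.g.\ $x_1\in\test$ always---then your tail argument breaks, and so does the paper's: by exactly your product argument, $\Mn(01^{2^n-1})$ is bounded below by a positive constant, so $\max_{f\in\M_1-\{f_1\}}\Mn(f)\not\to 0$. Since the paper never pins down the enumeration underlying $\Mn$ as a distribution on $\M$, choosing it so that $\train$ is a prefix, as you do explicitly, is a legitimate and in fact necessary reading of the statement; it is the same implicit assumption the paper's own proof rests on.
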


\iftecrep
Before the proof of Proposition \ref{prop_main}, we require an easy lemma.
\begin{lemma}
Let $N \subset \M$ then there exists an algorithm $A_N$ such that
\eq{
\sum_{f \in N} P(f) L_{A_N}(f, \train) \leq {1 \over 2} \sum_{f \in N} P(f)
}
\end{lemma}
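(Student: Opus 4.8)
The plan is to construct $A_N$ as a $P$-weighted majority-vote classifier restricted to the functions in $N$ that are consistent with the observed training data. Since a classification algorithm may depend only on $f_\train$ and not on $f$ itself, I would first partition $N$ according to the training restriction: for each $g:\train\to\Y$ let $N_g:=\{f\in N:f_\train=g\}$, so that $N$ is the disjoint union of the $N_g$ and any algorithm necessarily produces identical guesses on all of $N_g$. On input $g$ and a test point $x\in\test$, define $A_N(g,x)$ to be a label $a\in\Y$ minimising the consistent weight $\sum_{f\in N_g}P(f)\ind{f(x)\neq a}$, i.e.\ the $P$-weighted majority label among the functions in $N_g$ at the point $x$.

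Next I would rewrite the target sum by inserting the definition of $L$, grouping by $g$, and interchanging the (finite) sums over $f$ and over $x\in\test$:
\eq{
\sum_{f\in N}P(f)L_{A_N}(f,\train)=\sum_g{1\over|\test|}\sum_{x\in\test}\sum_{f\in N_g}P(f)\,\ind{A_N(g,x)\neq f(x)}.
}
The inner quantity $\sum_{f\in N_g}P(f)\ind{A_N(g,x)\neq f(x)}$ is exactly the weight that $A_N$ misclassifies at $x$, and by the choice of $A_N(g,x)$ as the majority label it equals $\min_{a\in\Y}\sum_{f\in N_g}P(f)\ind{f(x)\neq a}$.

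The key estimate is the per-point bound. For $\Y=\B$ the total consistent weight $\sum_{f\in N_g}P(f)$ splits into the two parts $f(x)=0$ and $f(x)=1$, and the minority part is at most half the total; hence for every $x$,
\eq{
\min_{a\in\Y}\sum_{f\in N_g}P(f)\,\ind{f(x)\neq a}\ \leq\ \tfrac12\sum_{f\in N_g}P(f).
}
Substituting this into the display above, the right-hand side no longer depends on $x$, so averaging over $\test$ leaves it unchanged and each group contributes at most $\tfrac12\sum_{f\in N_g}P(f)$; summing over the disjoint groups recovers $\tfrac12\sum_{f\in N}P(f)$, which is the claim.

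The step I expect to need the most care is ensuring the construction respects the information actually available to the algorithm: the guesses may depend only on $g=f_\train$, which is why the majority is taken within each $N_g$ rather than over all of $N$, and why conditioning on $g$ before interchanging the sums is essential. The binary-label assumption is what pins the minority fraction at $\tfrac12$; for larger $\Y$ the same argument instead yields the weaker factor $(|\Y|-1)/|\Y|$, matching the uniform NFL value of Theorem~\ref{thm-nfl1}.
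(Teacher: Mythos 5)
Your proof is correct, but it takes a genuinely different route from the paper's. The paper's argument is a two-line averaging trick over constant algorithms: letting $A_0$ and $A_1$ be the algorithms that always output $0$ and $1$ respectively, binary labels give $L_{A_0}(f, \train) = 1 - L_{A_1}(f, \train)$ for every $f$, hence
\eq{
\sum_{f \in N} P(f) L_{A_0}(f, \train) + \sum_{f \in N} P(f) L_{A_1}(f, \train) = \sum_{f \in N} P(f),
}
so at least one of the two constant algorithms achieves weighted loss at most $\tfrac12\sum_{f\in N}P(f)$, and $A_N$ is taken to be that one. Your construction instead builds an explicit $P$-weighted majority-vote classifier within each consistency class $N_g$ and bounds the misclassified weight point by point. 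The paper's route is shorter and does not condition on the training data at all (the chosen algorithm is constant); yours is more work but produces a stronger object: your classifier satisfies the $\tfrac12$ bound separately for every test point $x$ and every training restriction $g$, so it dominates the better of $A_0,A_1$ and is the natural Bayes-optimal predictor for the conditional weights. Both arguments hinge on the same binary fact --- the minority label carries at most half the weight --- and, as you note, both generalize to the factor $(|\Y|-1)/|\Y|$ for larger label sets (the paper's version via $|\Y|$ constant algorithms). Your care about conditioning on $g$ is not needed for the bound itself, as the paper's proof shows, but it is needed for your particular classifier to be well-defined as an algorithm in the paper's sense, and you handle that point correctly.
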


\begin{proof}
Let $A_i$ with $i \in \left\{0, 1\right\}$ be the algorithm
always choosing $i$. Note that
\eq{
\sum_{f \in N} P(f) L_{A_0}(f, \train) = \sum_{f \in N} P(f) (1 - L_{A_1}(f, \train))
}
The result follows easily.
\qedifnone\end{proof}

\begin{proof}[\proofof Proposition \ref{prop_main}]
Now let $\M_1$ be the set of all $f \in \M$ with $f(y) = 1
\forall y \in \train$ and $\M_0 = \M - \M_1$. Now construct an
$A$ by
\eq{
A(f_\train, x) = \begin{cases}
1 & \text{if } f \in \M_1 \\
A_{\M_0}(f_\train, x) & \text {otherwise}
\end{cases}
}
Let $f_1 \in \M_1$ be the constant valued function such that
$f_1(x) = 1 \forall x$ then
\eqn{
\label{fl-1} L_A(\Mn, \train) &= \sum_{f \in \M} \Mn(f) L_A(f, \train) \\
\label{fl-2} &= \sum_{f \in \M_0} \Mn(f) L_A(f, \train) + \sum_{f \in \M_1} \Mn(f) L_A(f, \train) \\
\label{fl-3}&\leq {1 \over 2} \sum_{f \in \M_0} \Mn(f) + \sum_{f \in \M_1} \Mn(f) L_A(f, \train) \\
\label{fl-4}&\leq {1 \over 2} \sum_{f \in \M_0} \Mn(f) + \sum_{f \in \M_1 - f_1} \Mn(f) \\
\label{fl-5}&< {1 \over 2} (1 - \delta) + \sum_{f \in \M_1 - f_1} \Mn(f)
< {1 \over 2}
}
where (\ref{fl-1}) is definitional, (\ref{fl-2}) follows by
splitting the sum into $\M_0$ and $\M_1$, (\ref{fl-3}) by the
previous lemma, (\ref{fl-4}) since loss is bounded by $1$ and
the loss incurred on $f_1$ is $0$. The first inequality of
(\ref{fl-5}) follows since it can be shown that there exists a
$\delta > 0$ such that $\Mn(f_1) > \delta$ with $\delta$
independent of $n$. The second because $\max_{f \in \M_1 -
\left\{f_1\right\}} \Mn(f) \stackrel{n \to \infty}
\longrightarrow 0$ and $|\M_1|$ is independent of $n$.
\qedifnone\end{proof}
\else
The proof is omitted due to space limitations, but the idea is
very simple. Consider the algorithm such that $A(f|_\train, x)
= 1$ if $f(x) = 1$ for all $x \in \train$ and $A(f|_\train, x)$
is random otherwise. Then show that if the amount of training
data is extremely large relative to the testing data then the
Solomonoff prior assigns greater weight to the function $f_1(x)
:= 1$ for all $x$ than the set of functions satisfying $f(x) =
1$ for all $x \in \train$ but $f(x) \neq 1$ for some $x \in
\test$.
\fi

The proposition is unfortunately extremely weak. It is more
interesting to know exactly what conditions are required to do
much better than random. In the next section we present an
algorithm with good performance on all well structured problems
when given ``good'' training data. Without good training data,
even assuming a Solomonoff prior, we believe it is unlikely
that the best algorithm will perform well.

Note that while it appears intuitively likely that any
non-uniform distribution such as $\Mn$ might offer a free
lunch, this is in fact not true. It is shown in \cite{Sch01}
that there exist non-uniform distributions where the loss over
a problem family is independent of algorithm. These
distributions satisfy certain symmetry conditions not satisfied
by $\Mn$, which allows Proposition \ref{prop_main} to hold.

\section{Complexity-based classification}

Solomonoff induction is well known to solve the online
prediction problem where the true value of each classification
is known after each guess. In our setup, the true
classification is only known for the training data, after which
the algorithm no longer receives feedback. While Solomonoff
induction can be used to bound the number of total errors while
predicting deterministic sequences, it gives no indication of
when these errors may occur. For this reason we present a
complexity-inspired algorithm with better properties for the
offline classification problem.

Before the algorithm we present a little more notation. As
usual, let $\X = \left\{x_1,x_2,\cdots, x_n\right\} \subseteq
\B^k$, $\Y = \B$ and let $\train \subseteq \X$ be the training
data. Now define an indicator function $\chi$ by $\chi_i :=
\ind{x_i \in \train}$.
\begin{definition}
Let $f \in \Y^\X$ be a classification problem. The algorithm
$A^*$ is defined in two steps.
\eq{
\tilde f &:= \argmin_{\tilde f \in \Y^\X} \left\{\KM(\tilde f ; \X) : \chi_i = 1 \implies \tilde f(x_i) = f(x_i) \right\}  \\
A^*(f_\train, x_i) &:= \tilde f(x_i)
}
\end{definition}
Essentially $A^*$ chooses for its model the simplest $\tilde f$
consistent with the training data and uses this for classifying
unseen data. Note that the definition above only uses the value
$y_i = f(x_i)$ where $\chi_i = 1$, and so it does not depend on
unseen labels.

If $\KM(f \s \X)$ is ``small'' then the function we wish to
learn is simple so we should expect to be able to perform good
classification, even given a relatively small amount of
training data. This turns out to be true, but only with a good
choice of training data. It is well known that training data
should be ``broad enough'', and this is backed up by the
example below and by Theorem \ref{thm-class-alg}, which give an
excellent justification for random training data based on good
theoretical (Theorem \ref{thm-class-alg}) and philosophical
(AIT) underpinnings. The following example demonstrates the
effect of bad training data on the performance of $A^*$.

\setlength\abovecaptionskip{0cm}
\setlength\belowcaptionskip{0cm}
\setlength\floatsep{0cm}
\setlength\textfloatsep{0cm}
\setlength\intextsep{0.3cm}
\begin{figure}[h!]
\begin{center}
\includegraphics{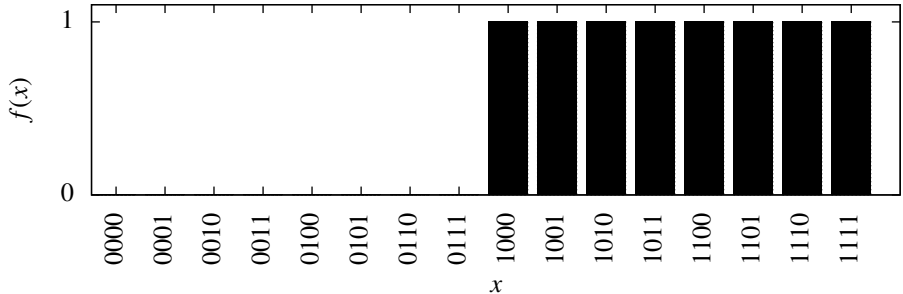}
\end{center}
\caption{A simple problem}
\label{fig-simple}
\end{figure}
\begin{example}
Let $\X = \left\{0000,0001,0010, 0011, \cdots, 1101, 1110,
1111\right\}$ and $f(x)$ be defined to be the first bit of $x$
as in Figure \ref{fig-simple}. Now suppose $\chi = 1^{8} 0^{8}$
(So the algorithm is only allowed to see the true class labels
of $x_1$ through $x_{8}$). In this case, the simplest $\tilde
f$ consistent with the first 16 data points, all of which are
zeros, is likely to be $\tilde f(x) = 0$ for all $x \in \X$ and
so $A^*$ will fail on every piece of testing data!

On the other hand, if $\chi = 001010011101101$, which was
generated by tossing a coin 16 times, then $\tilde f$ will very
likely be equal to $f$ and so $A^*$ will make no errors. Even
if $\chi$ is zero about the critical point in the middle
($\chi_{8} = \chi_{9} = 0$) then $\tilde f$ should still match
$f$ mostly around the left and right and will only be unsure
near the middle.

Note, the above is not precisely true since for small strings
the dependence of $\KM$ on the universal monotone Turing
machine can be fairly large. However if we increase the size of
the example so that $|\X| > 1000$ then these quirks disappear
for natural reference universal Turing machines.
\end{example}
\begin{definition}[Entropy]
Let $\theta\in[0,1]$
\eq{
H(\theta):=
\begin{cases}
-[ \theta \log \theta + (1-\theta)\log (1-\theta)] & \text{if } \theta \neq 0 \text{ and } \theta\neq 1 \\
0 & \text{otherwise}
\end{cases}
}
\end{definition}
\begin{theorem}\label{thm-class-alg}
Let $\theta \in (0, 1)$ be the proportion of data to be given for training then:
\begin{enumerate}
\item There exists a $\chi \in \B^\infty$ (training set) such that for all $n \in \N$, $\theta n - c_1 < \#1(\chi_{1:n}) < \theta n + c_1$ and
$n H(\theta) - c_2 < \KM(\chi_{1:n})$ for some $c_1,c_2 \in \R^+$.
\item For $n = |\X|$, the loss of algorithm $A^*$ when using training data determined by $\chi$ is bounded by
\eq{
L_{A^*}(f, \train) < {2\KM(f;\X) + \KM(\X) + c_2 + c_3 \over {n(1-\theta-c_1/n)\log (1-\theta+c_1/n)^{-1}}}
}
where $c_3$ is some constant independent of all inputs.
\end{enumerate}
\end{theorem}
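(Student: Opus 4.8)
The two parts play quite different roles, so I would prove them in the order that exposes the dependency. Part~2 is the substance: its engine is a \emph{compression argument} establishing that if $A^*$ committed many test errors, then the training indicator $\chi$ would be compressible, contradicting the near-maximal complexity promised by part~1. Part~1 then only has to manufacture one sequence $\chi$ of the correct density that is essentially incompressible, which I would obtain from a counting (incompressibility) argument.

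For part~2, write $\v y=f(x_1)\cdots f(x_n)$ and $\tilde{\v y}=\tilde f(x_1)\cdots \tilde f(x_n)$, and set $s=\#1(\chi_{1:n})=|\train|$, $t=|\test|=n-s$ and $m=t\,L_{A^*}(f,\train)$, the number of test errors. Two observations drive everything. First, by the minimality built into $A^*$ and since $f$ is itself consistent with the training labels, $\KM(\tilde f\s\X)\le\KM(f\s\X)$. Second, $f$ and $\tilde f$ agree on all of $\train$ by construction, so every disagreement is a test point: the set $B=\{i:f(x_i)\neq\tilde f(x_i)\}$ has $|B|=m$ and $\train\subseteq A:=\X-B$ with $|A|=n-m$. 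I would then \emph{recover} $\chi$ from $\X$, the label strings $\v y,\tilde{\v y}$, and the index of $\train$ inside $A$: from $\v y$ and $\tilde{\v y}$ one computes $A$, and an index of $\log\binom{n-m}{s}$ bits pins down $\train\subseteq A$, hence $\chi$. Turning this decoder into a complexity bound gives
\eq{
\KM(\chi_{1:n})\le \KM(\X)+2\KM(f\s\X)+\log\binom{n-m}{s}+c_3,
}
the factor $2$ being precisely the cost of describing \emph{both} $f$ and $\tilde f$ (one cannot locate $B$ from $f$ alone, as $\tilde f$ itself depends on $\chi$).

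Combining this with the lower bound $\KM(\chi_{1:n})\ge nH(\theta)-c_2$ from part~1 and rearranging leaves a purely combinatorial estimate: I must show $nH(\theta)-\log\binom{n-m}{s}\ge m\log(1-\theta+c_1/n)^{-1}$. Using $\binom{n}{s}/\binom{n-m}{s}=\prod_{i=0}^{m-1}(n-i)/(n-s-i)$, the count bound $s>\theta n-c_1$ forces each factor to exceed $(1-\theta+c_1/n)^{-1}$, while $\log\binom{n}{s}\ge nH(\theta)-O(c_1)$ by Stirling together with $|H(s/n)-H(\theta)|=O(c_1/n)$. Dividing the resulting bound on $m$ by $|\test|=t\ge n(1-\theta-c_1/n)$ then reproduces the stated denominator exactly. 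These Stirling/Lipschitz manipulations, together with absorbing the lower-order delimiter and subadditivity terms of monotone $\KM$ into $c_3$, I expect to be routine.

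For part~1, at the single scale $N=|\X|$ that part~2 actually consumes, the sequence is easy to produce: the strings with $\#1\in(\theta N-c_1,\theta N+c_1)$ number at least $\binom{N}{\lfloor\theta N\rfloor}\approx 2^{NH(\theta)}$ and form a set computable from $N$, so at most $2^{NH(\theta)-c_2}$ of them can have $\KM<NH(\theta)-c_2$, and any of the rest serves. The step I expect to be the main obstacle is part~1's stronger claim \emph{for all} $n$ with a single sequence and a \emph{constant} band $c_1$: holding $\#1(\chi_{1:n})$ within $O(1)$ of $\theta n$ for every prefix confines $\chi$ to a set of strings whose exponential growth rate is an entropy strictly below $H(\theta)$, which would eventually drive $\KM(\chi_{1:n})$ below $nH(\theta)-c_2$. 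I would resolve the tension either by reading the statement as supplying $\chi$ at the scale $n=|\X|$, or by allowing a slowly growing band $c_1=c_1(n)$ (for instance $O(\sqrt{n\log\log n})$, attainable by a sequence Martin--L\"of random for the Bernoulli$(\theta)$ measure $\mu_\theta$ via the Levin--Schnorr identity $\KM(\chi_{1:n})\ge-\log\mu_\theta(\chi_{1:n})-O(1)$); either relaxation still leaves the part~2 bound meaningful, since $c_2$ and $c_1/n$ enter it divided by $n$.
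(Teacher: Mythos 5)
Your part~2 is, up to bookkeeping, the paper's own proof. The paper recovers $\chi_{1:n}$ from $y:=f(x_1)\cdots f(x_n)$, $\tilde y:=\tilde f(x_1)\cdots\tilde f(x_n)$ and the indicator $\psi_i:=\ind{\chi_i=0 \wedge y_i=\tilde y_i}$, uses exactly your minimality step $\KM(\tilde y\s x)\le \KM(y\s x)$, and codes $\psi$ given $y,\tilde y$ by frequency estimation at cost $nJ(\bar\theta,\alpha)$, where $\bar\theta=\#1(\chi_{1:n})/n$ and $\alpha=L_{A^*}(f,\train)$. With your $s=\bar\theta n$ and $m=\alpha(1-\bar\theta)n$ one has $nJ(\bar\theta,\alpha)=(n-m)H\bigl(s/(n-m)\bigr)$, which is your $\log\binom{n-m}{s}$ up to $O(\log n)$: your index of $\train$ inside the agreement set is just another name for coding $\psi$ on the positions where $y_i=\tilde y_i$. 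The paper then finishes with the analytic inequality of its appendix (Lemma~\ref{lem1}), $H(\bar\theta)-J(\bar\theta,\alpha)\ge \alpha(1-\bar\theta)\log(1-\bar\theta)^{-1}$, which is precisely the continuous form of your ratio bound $\log\binom{n}{s}-\log\binom{n-m}{s}\ge m\log\frac{n}{n-s}$. Two minor slips on your side, neither fatal: the Stirling fact you actually need is the \emph{upper} bound $\log\binom{n}{s}\le nH(s/n)\le nH(\theta)+O(c_1)$, not the lower bound you wrote; and your decoder must additionally be told $s$ (and $n$) to parse the index, an $O(\log n)$ overhead --- but the paper's frequency-estimation code hides the same $O(\log n)$ inside $c_3$, so this imprecision is shared rather than introduced by you.

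On part~1 you have caught a real defect rather than created one. The paper's entire proof of part~1 is ``choose $\chi$ Martin-L\"of random with respect to the Bernoulli($\theta$) process'' plus a citation, and your suspicion is correct that this cannot deliver both clauses with constants $c_1,c_2$: by the law of the iterated logarithm an ML-random $\chi$ has $|\#1(\chi_{1:n})-\theta n|$ of order $\sqrt{n\log\log n}$ infinitely often, so the constant-width band fails; conversely, as you argue, any $\chi$ all of whose prefixes stay within a constant $c_1$ of $\theta n$ lies, for each $n$, in a set (enumerable from a rational approximation of $\theta$) of size at most $2^{nH(\theta)-\delta n}$ for some $\delta>0$ depending on $c_1$ and $\theta$, whence $\KM(\chi_{1:n})\le nH(\theta)-\delta n+O(\log n)$ eventually, contradicting the second clause. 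So the two clauses are jointly unsatisfiable as literally stated, and one of your two repairs is required: either a growing band $c_1(n)=O(\sqrt{n\log\log n})$, with the Levin--Schnorr characterisation giving $c_2(n)=O(c_1(n))$ (both $o(n)$, so the part~2 bound stays meaningful), or the single-scale reading $n=|\X|$, which is all that part~2 consumes. In this respect your proposal is not only faithful to the paper's argument but more careful than the paper itself.
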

This theorem shows that $A^*$ will do well on all problems
satisfying $\KM(f;\X) = o(n)$ when given good (but not
necessarily a lot) of training data. Before the proof, some
remarks.
\begin{enumerate}
\item The bound is a little messy, but for small $\theta$,
    large $n$ and simple $\X$ we get $L_{A^*}(f, \train)
    \stackrel{\approx}< {2\KM(f;\X) / (n\theta)}$.
\item The loss bound is extremely bad for large $\theta$.
    We consider this unimportant since we only really care
    if $\theta$ is small. Also, note that if $\theta$ is
    large then the number of points we have to classify is
    small and so we still make only a few mistakes.
\item The constants $c_1, c_2$ and $c_3$ are relatively
    small (around 100-500). They represent the length of
    the shortest programs computing simple transformations
    or encodings. This {\it is} dependent on the universal
    Turing machine used to define the Solomonoff
    distribution, but for a {\it natural} universal Turing
    machine we expect it to be fairly small
    \cite[sec.2.2.2]{Hut04}.
\item The ``special'' $\chi$ is not actually that special
    at all. In fact, it can be generated easily with
    probability 1 by tossing a coin with bias $\theta$
    infinitely often. More formally, it is a $\mu$
    Martin-L\"of random string where $\mu(1 | x) = \theta$
    for all $x$. Such strings form a $\mu$-measure 1 set in
    $\B^\infty$.
\end{enumerate}
\begin{proof}[\proofof Theorem \ref{thm-class-alg}]
The first is a basic result in algorithmic information theory
\cite[p.318]{LV08}. Essentially choosing $\chi$ to be
Martin-L\"of random with respect to a Bernoulli process
parameterized by $\theta$. From now on, let $\bar \theta =
\#1(\chi) / n$. For simplicity we write $x := x_1x_2\cdots
x_n$, $y := f(x_1)f(x_2)\cdots f(x_n)$, and $\tilde y := \tilde
f(x_1) \tilde f(x_2)\cdots \tilde f(x_n)$.
Define indicator $\psi$ by $\psi_i := \ind{\chi_i = 0 \wedge
y_i = \tilde y_i}$. Now note that there exists $c_3 \in \R$
such that
\eqn{
\label{eqn_coding}\KM(\chi_{1:n}) < \KM(\psi_{1:n};y, \tilde y) + \KM(y;x) + \KM(\tilde y;x) + \KM(x) + c_3
}
This follows since we can easily use $y$, $\tilde y$ and
$\psi_{1:n}$ to recover $\chi_{1:n}$ by $\chi_i = 1$ if and
only if $y_i = \tilde y_i$ and $\psi_i \neq 1$. The constant
$c_3$ is the length of the reconstruction program.
Now $\KM(\tilde y ; x) \leq \KM(y ; x)$ follows directly from
the definition of $\tilde f$. We now compute an upper bound on
$\KM(\psi)$.
Let $\alpha := L_{A^*}(f, \train)$ be the proportion of the
testing data on which $A^*$ makes an error. The following is
easy to verify:
\begin{enumerate}
\item $\#1(\psi) = (1 - \alpha)(1 - \bar\theta) n$
\item $\#0(\psi) = (1 - (1-\alpha)(1-\bar\theta)) n$
\item $y_i \neq \tilde y_i \implies \psi_i = 0$
\item $\#1(y \oplus \tilde y) = \alpha(1-\bar\theta) n$ where $\oplus$ is the exclusive or function.
\end{enumerate}
We can use point 3 above to trivially encode $\psi_i$ when
$\tilde y_i \neq y_i$. Aside from these, there are exactly
$\bar\theta n$ 0's and $(1 - \alpha)(1 - \bar\theta)n$ 1's.
Coding this subsequence using frequency estimation gives a code
for $\psi_{1:n}$ given $y$ and $\tilde y$, which we substitute
into (\ref{eqn_coding}).
\eqn{
\nonumber nH(\bar\theta) - c_2 &\leq \KM(\chi_{1:n}) \leq
\KM(\psi_{1:n} ; y, \tilde y) + \KM(y;x) + \KM(\tilde y;x) \\
&\qquad + \KM(x) + c_3 \\
\nonumber &\leq 2 \KM(y;x) + \KM(x) + n J(\bar\theta, \alpha) + c_3
}
where $J(\bar \theta, \alpha) := \left[\bar\theta +
(1-\bar\theta)(1-\alpha)\right] H\left({\bar\theta /\left[
\bar\theta + (1-\bar\theta)(1-\alpha)\right]}\right)$. An easy
technical result (Lemma \ref{lem1} in the appendix) shows that
for $\bar\theta \in (0, 1)$
\eq{
0 \leq \alpha(1 - \bar\theta)\log {1 \over 1-\bar\theta} \leq
H(\bar\theta) - J(\bar\theta, \alpha)
}
Therefore $n \alpha(1-\bar\theta)\log {1 \over 1-\bar\theta}
\leq 2 \KM(y;x) + \KM(x) + c_2 + c_3$. The result follows by
rearranging and using part 1 of the theorem.
\qedifnone\end{proof}
Since the features are known, it is unexpected for the bound to
depend on their complexity, $\KM(\X)$. Therefore it is not
surprising that this dependence can be removed at a small cost,
and with a little extra effort.
\begin{theorem}\label{thm-class-alg2}
Under the same conditions as Theorem \ref{thm-class-alg}, the
loss of $A^*$ is bounded by
\eq{
L_{A^*}(f, \train) < {2\KM(f;\X) + 2 \left[\log |\X| + \log \log |\X| \right] + c \over {n(1-\theta-c_1/n)\log (1-\theta+c_1/n)^{-1}}}
}
where $c$ is some constant independent of inputs.
\end{theorem}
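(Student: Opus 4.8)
The plan is to retrace the proof of Theorem~\ref{thm-class-alg} and isolate exactly where the term $\KM(\X)$ is paid for, then eliminate it by measuring complexity relative to the features. Inspecting inequality~(\ref{eqn_coding}), the summand $\KM(x) = \KM(\X)$ enters for a single reason: the reconstruction of $\chi_{1:n}$ relies on the \emph{conditional} complexities $\KM(y;x)$ and $\KM(\tilde y;x)$, and decoding these requires $x$, which is supplied at cost $\KM(\X)$. Since the features are known to the algorithm, the natural remedy is to measure the complexity of $\chi_{1:n}$ itself \emph{given} $x$, so that $x$ is available for free on both sides of the coding inequality and the $\KM(\X)$ term never arises. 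Everything downstream of the coding step in Theorem~\ref{thm-class-alg} is then reused unchanged.

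First I would strengthen part~1 of Theorem~\ref{thm-class-alg}. Instead of an unconditional Martin-L\"of random $\chi$, I choose $\chi$ to be Martin-L\"of random for the Bernoulli$(\theta)$ measure $\mu$ \emph{relative to the oracle} $x$. For every fixed finite $x$ the set of such sequences still has $\mu$-measure $1$, so such a $\chi$ exists and inherits the same frequency concentration $\theta n - c_1 < \#1(\chi_{1:n}) < \theta n + c_1$. The relativized randomness bound then yields $nH(\theta) - c_2 \leq \KM(\chi_{1:n};x)$, with $c_2$ still independent of $n$ and $x$.

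Next I would repeat the coding argument conditional on $x$. The reconstruction $\chi_i = 1$ precisely when $y_i = \tilde y_i$ and $\psi_i \neq 1$ uses only $y$, $\tilde y$ and $\psi$, giving
\eq{
\KM(\chi_{1:n};x) \leq \KM(\psi_{1:n};y,\tilde y) + \KM(y;x) + \KM(\tilde y;x) + c
}
with no $\KM(\X)$ present. The one genuinely new cost is that, once $x$ is no longer coded explicitly, the decoder can no longer read the length $n = |\X|$ off the input, so the conditional programs for $y$ and $\tilde y$ must be delimited by a self-delimiting prefix code for their lengths. Each such length is at most $n + O(1)$, so a prefix code costs $\log|\X| + \log\log|\X|$, and the two programs together contribute exactly the $2[\log|\X| + \log\log|\X|]$ appearing in the statement. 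From here the argument is identical to Theorem~\ref{thm-class-alg}: bound $\KM(\tilde y;x) \leq \KM(y;x) = \KM(f;\X)$, code $\psi$ by frequency estimation to introduce $nJ(\bar\theta,\alpha)$, apply Lemma~\ref{lem1}, and rearrange using part~1, with the old constants $c_2,c_3$ and the relativized randomness constant absorbed into the single $c$.

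The main obstacle is the careful bookkeeping of the conditional coding in the monotone setting. Because the side tape carries $x*$ rather than $x$, the decoder cannot directly recover $\ell(x)$ and hence $n = |\X|$, so I must verify that supplying the self-delimited lengths is both necessary and sufficient for the concatenated decoder to produce $\chi_{1:n}$ correctly, and confirm that this accounts for the full logarithmic overhead while every remaining $O(1)$ reconstruction cost collapses into $c$. The relativized randomness lower bound, the combinatorial code for $\psi$, and Lemma~\ref{lem1} are then reused verbatim, so the only delicate point is pinning down that the overhead is precisely $2[\log|\X| + \log\log|\X|] + c$ rather than merely $O(\log|\X|)$.
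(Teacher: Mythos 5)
Your reduction to a conditional coding inequality is the right instinct, but the proposal has a genuine gap at its foundation: the claim that you can pick $\chi$ Martin-L\"of random relative to the oracle $x$ ``with $c_2$ still independent of $n$ and $x$.'' In Theorem \ref{thm-class-alg} the sequence $\chi$ is fixed once, as a function of $\theta$ alone, and the loss bound must then hold for every problem $(f,\X)$ with constants independent of the inputs; Theorem \ref{thm-class-alg2} inherits exactly this quantifier structure. No single $\chi$ can be random relative to every feature string with a uniform deficiency constant. Concretely, since $\ell(x) = nk \geq n$, one may choose distinct features whose first bits spell out $\chi_{1:n}$ (i.e.\ the first bit of $x_i$ equals $\chi_i$); then $\KM(\chi_{1:n} \s x) = O(\log n)$, so the relativized lower bound $nH(\theta) - c_2 \leq \KM(\chi_{1:n} \s x)$ fails for every fixed $c_2$ once $n$ is large. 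Your $c_2$ therefore necessarily depends on the problem instance, and so does the final ``constant'' $c$, which empties the statement of content. Note that this adversarially correlated $\X$ is precisely the situation the $\KM(\X)$ term in Theorem \ref{thm-class-alg} guards against (there $\KM(\X) = \Omega(n)$ and the bound is legitimately vacuous); once that term is removed, the possible correlation between $\chi$ and $x$ must be paid for explicitly and uniformly, and relativization does not do this.

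A secondary symptom of the same problem: your source for the $2\left[\log|\X| + \log\log|\X|\right]$ term (self-delimiting the lengths of the conditional programs) is reverse-engineered rather than forced by the argument. The conditional coding step concatenates exactly the same programs as (\ref{eqn_coding}), which the paper charges only the constant $c_3$ for; with $x*$ on the side tape the decoder needs no more length information than in the unconditional proof. So if your relativization step were sound, your own accounting would yield a bound with no logarithmic terms at all, strictly stronger than Theorem \ref{thm-class-alg2} and false by the example above. The paper's proof instead keeps $\chi$ unconditionally random and pays for the possible correlation explicitly: it replaces the left side of (\ref{eqn_coding}) by the joint complexity $\KM(\chi_{1:n}, x)$, lower bounds it via $\KM(\chi_{1:n},x) \geq K(\chi_{1:n},x) - K(\ell(x)) - O(1)$ together with symmetry of information $K(\chi_{1:n},x) = K(x) + K(\chi_{1:n} \s x, K(x)) + O(1)$, cancels the freed $K(x)$ against $\KM(x) \leq K(x)$ on the right, and then uses a counting/Stirling argument to show that for \emph{most} frequency-typical $\chi$ one has $K(\chi_{1:n}) \leq K(\chi_{1:n} \s x, K(x)) + \log\ell(x) + r$, i.e.\ conditioning on $x$ saves at most about $\log\ell(x)$ bits outside a small exceptional set of training sequences. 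The two logarithmic terms in the statement are exactly $K(\ell(x)) \leq \log\ell(x) + 2\log\log\ell(x) + O(1)$ plus this counting slack; the ``for most $\chi$'' caveat is the honest, uniform substitute for the non-correlation property that your relativization tried to assume for free.
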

This version will be preferred to Theorem \ref{thm-class-alg}
in cases where $\KM(\X) > 2 \left[\log |\X| + \log\log
|\X|\right]$. The
proof of Theorem \ref{thm-class-alg2} is almost identical to that of Theorem \ref{thm-class-alg}.\\
{\it Proof sketch:}
The idea is to replace equation (\ref{eqn_coding}) by
\eqn{
\label{eqn_coding2} \KM(\chi_{1:n}, x) < \KM(\psi_{1:n};y, \tilde y) + \KM(y;x) + \KM(\tilde y;x) + \KM(x) + c_3
}
Then use the following identities $K(\chi_{1:n} ; x, K(x)) +
K(x) < K(\chi_{1:n}, x) - K(\ell(x)) < \KM(\chi_{1:n}, x)$
where the inequalities are true up to constants independent of
$x$ and $\chi$. Next a counting argument in combination with
Stirling's approximation can be used to show that for most
$\chi$ satisfying the conditions in Theorem \ref{thm-class-alg}
have $\KM(\chi_{1:n}) < K(\chi_{1:n}) < K(\chi_{1:n} ; x, K(x))
+ \log\ell(x) + r$ for some constant $r > 0$ independent of $x$
and $\chi$.
Finally use $\KM(x) < K(x)$ for all $x$ and $K(\ell(x)) < \log
\ell(x) + 2 \log \log \ell(x) + r$ for some constant $r > 0$
independent of $x$ to rearrange (\ref{eqn_coding2}) into
\eq{
\KM(\chi_{1:n}) &< \KM(\psi_{1:n} ; y, \tilde y) + \KM(y ; x) + \KM(\tilde y;x) + 2 \log \ell(x)   \\
&\qquad + 2 \log \log \ell(x) + c
}
for some constant $c > 0$ independent of $\chi, \psi, x$ and
$y$. Finally use the techniques in the proof of Theorem
\ref{thm-class-alg} to complete the proof. \qed

\section{Discussion}

\paradot{Summary}
Proposition \ref{prop_main} shows that if problems are
distributed according to their complexity, as Occam's razor
suggests they should, then a (possibly small) free lunch
exists. While the assumption of simplicity still represents a
bias towards certain problems, it is a universal one in the
sense that no style of structured problem is more favoured than
another.

In Section 4 we gave a complexity-based classification
algorithm and proved the following properties:
\begin{enumerate}
\item It performs well on problems that exhibit some compressible structure, $\KM(f;\X) = o(n)$.
\item Increasing the amount of training data decreases the error.
\item It performs better when given a good (broad/randomized) selection of training data.
\end{enumerate}
Theorem \ref{thm-class-alg} is reminiscent of the transductive
learning bounds of Vapnik and others \cite{DEM04,Vap82,Vap00},
but holds for {\it all} Martin-L\"of random training data,
rather than with high probability. This is different to the
predictive result in Solomonoff induction where results hold
with probability 1 rather than for all Martin-L\"of random
sequences \cite{HM07}. If we assume the training set is sampled
randomly, then our bounds are comparable to those in
\cite{DEM04}.

Unfortunately, the algorithm of Section 4 is incomputable.
However Kolmogorov complexity can be approximated via standard
compression algorithms, which may allow for a computable
approximation of the classifier of Section 4. Such
approximations have had some success in other areas of AI,
including general reinforcement learning \cite{HKUV11} and
unsupervised clustering \cite{CV05}.

Occam's razor is often thought of as the principle of choosing
the simplest hypothesis matching your data. Our definition of
simplest is the hypothesis that minimises $\KM(f;X)$ (maximises
$M(f;X)$). This is perhaps not entirely natural from the
informal statement of Occam's razor, since $M(x)$ contains
contributions from all programs computing $x$, not just the
shortest.
We justify this by combining Occam's razor with Epicurus
principle of multiple explanations that argues for all
consistent hypotheses to be considered. In some ways this is
the most natural interpretation as no scientist would entirely
rule out a hypothesis just because it is slightly more complex
than the simplest. A more general discussion of this issue can
be found in \cite[sec.4]{Dow11}. Additionally, we can argue
mathematically that since $\KM \approx Km$, the simplest
hypothesis is very close to the mixture.\footnote{The bounds of
Section 4 would depend on the choice of complexity at most
logarithmically in $|\X|$ with $\KM$ providing the uniformly
better bound.} Therefore the debate is more philosophical than
practical in this setting.

An alternative approach to formalising Occam's razor has been
considered in MML \cite{BW68}. However, in the deterministic
setting the probability of the data given the hypothesis
satisfies $P(D|H) = 1$. This means the two part code reduces to
the code-length of the prior, $\log (1/P(H))$. This means the
hypothesis with minimum message length depends only on the
choice of prior, not the complexity of coding the data. The
question then is how to choose the prior, on which MML gives no
general guidance.
Some discussion of Occam's razor from a Kolmogorov complexity
viewpoint can be found in \cite{Hut10,KLV97,HR11}, while the
relation between MML and Kolmogorov complexity is explored in
\cite{DW99}.

\paradot{Assumptions}
We assumed finite $\X$, $\Y$, and deterministic $f$, which is
the standard transductive learning setting. Generalisations to
countable spaces may still be possible using complexity
approaches, but non-computable real numbers prove more
difficult. One can either argue by the strong Church-Turing
thesis that non-computable reals do not exist, or approximate
them arbitrarily well. Stochastic $f$ are interesting and we
believe a complexity-based approach will still be effective,
although the theorems and proofs may turn out to be somewhat
different.

\paradot{Acknowledgements}
We thank Wen Shao and reviewers for valuable feedback on
earlier drafts and the Australian Research Council for support
under grant DP0988049.

\iftecrep
\appendix
\section{Technical proofs}

\begin{lemma}[\proofof Entropy inequality]\label{lem1}
\eqn{
\label{lem1-eqn1} 0 &\leq \alpha(1-\theta) \log {1 \over {1-\theta}} \\
\label{lem1-eqn2} &\leq H(\theta) - \left[\theta + (1-\theta)(1-\alpha)\right] H\left({\theta \over \theta + (1-\theta)(1-\alpha)}\right)
}
With equality only if $\theta \in \left\{0, 1\right\}$ or $\alpha = 0$
\end{lemma}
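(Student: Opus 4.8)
The first inequality (\ref{lem1-eqn1}) is immediate: for $\theta \in (0,1)$ and $\alpha \in [0,1]$ each of the factors $\alpha$, $1-\theta$, and $\log\frac{1}{1-\theta}$ is nonnegative (the last because $1-\theta \leq 1$), so their product is $\geq 0$. Hence the real content is (\ref{lem1-eqn2}), and the plan is to rewrite its right-hand side so that a single convexity argument finishes everything, including the equality characterisation.

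The plan for the algebraic reduction is as follows. Write $u := 1-\theta$ and $\beta := \theta + (1-\theta)(1-\alpha)$, and record the identities $\beta\cdot\tfrac{\theta}{\beta} = \theta$ and $\beta\bigl(1 - \tfrac{\theta}{\beta}\bigr) = \beta - \theta = (1-\theta)(1-\alpha)$, together with $\beta = 1 - \alpha u$. Expanding $\beta H(\theta/\beta) = -\theta\log\tfrac{\theta}{\beta} - (1-\theta)(1-\alpha)\log\tfrac{(1-\theta)(1-\alpha)}{\beta}$ and subtracting it from $H(\theta)$, the $\theta\log\theta$ contributions cancel and the $\log\beta$ contributions collect into $-\beta\log\beta$. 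After also subtracting the left-hand quantity $\alpha(1-\theta)\log\frac{1}{1-\theta}$, every term carrying a factor $\log(1-\theta)$ cancels, leaving the clean identity
\[
\bigl[H(\theta) - \beta H(\theta/\beta)\bigr] - \alpha(1-\theta)\log\tfrac{1}{1-\theta} = u(1-\alpha)\log(1-\alpha) - (1-\alpha u)\log(1-\alpha u).
\]
Thus (\ref{lem1-eqn2}) is equivalent to $(1-\alpha u)\log(1-\alpha u) \leq u(1-\alpha)\log(1-\alpha)$.

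The finishing step is convexity. Let $\phi(t) := t\log t$, which is strictly convex on $[0,1]$ with $\phi(1) = 0$. The key observation is that $\beta = 1-\alpha u$ is a convex combination of $1-\alpha$ and $1$: indeed $u\,(1-\alpha) + \theta\cdot 1 = 1 - \alpha u$, with weights $u,\theta \geq 0$ summing to $1$. Jensen's inequality then yields $\phi(1-\alpha u) \leq u\,\phi(1-\alpha) + \theta\,\phi(1) = u\,\phi(1-\alpha)$, which is precisely the required inequality. Strict convexity of $\phi$ gives equality exactly when the two interpolated points coincide ($1-\alpha = 1$, i.e.\ $\alpha = 0$) or when one of the weights vanishes ($u = 0$ or $\theta = 0$, i.e.\ $\theta \in \{0,1\}$), which is exactly the stated equality condition. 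The base of the logarithm only scales both sides of every displayed inequality by the same positive constant, so it is irrelevant.

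I expect the main obstacle to be purely bookkeeping: performing the cancellations in the reduction without a sign slip, since several $\log$ terms must collapse exactly to expose the two-term inequality. Once that identity is in hand, the convexity observation is the natural and essentially forced way to conclude, and it delivers the equality characterisation for free. As a fallback, should the convex-combination identity be overlooked, one can fix $u$ and differentiate $D(\alpha) := u(1-\alpha)\log(1-\alpha) - (1-\alpha u)\log(1-\alpha u)$, obtaining $D'(\alpha) = u\log\frac{1-\alpha u}{1-\alpha} > 0$ on $(0,1)$ together with $D(0)=0$; but the convexity route is shorter and I would prefer it.
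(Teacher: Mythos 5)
Your proof is correct, but it takes a genuinely different route from the paper's. The paper fixes $\theta \in (0,1)$ and argues by calculus: it differentiates the right-hand side of (\ref{lem1-eqn2}) with respect to $\alpha$, obtaining $(1-\theta)\log\frac{1-\alpha(1-\theta)}{(1-\alpha)(1-\theta)}$, lower-bounds this derivative by the constant $(1-\theta)\log(1-\theta)^{-1}$ (since $\frac{1-\alpha(1-\theta)}{1-\alpha}\geq 1$), and integrates over $[0,\alpha]$, both sides vanishing at $\alpha=0$; the equality cases are checked separately as "obvious." You instead perform the entropy expansion once and for all, reducing (\ref{lem1-eqn2}) to the two-term inequality $(1-\alpha u)\log(1-\alpha u) \leq u(1-\alpha)\log(1-\alpha)$ with $u = 1-\theta$, and then observe that $1-\alpha u = u(1-\alpha) + \theta\cdot 1$ is a convex combination, so the claim is exactly Jensen's inequality for the strictly convex $\phi(t)=t\log t$. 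Your reduction identity is correct (the cancellations do work out: $H(\theta)-\beta H(\theta/\beta) = -\alpha u \log u + u(1-\alpha)\log(1-\alpha) - \beta\log\beta$ with $\beta = 1-\alpha u$, and subtracting $\alpha u\log\frac{1}{u}$ kills the remaining $\log u$ term), and the Jensen step is valid, including at the boundary point $\alpha=1$ where $0\log 0=0$. What each approach buys: the paper's route is short once one is willing to differentiate an entropy expression and integrate, but it handles the equality characterisation as an afterthought; your route avoids calculus, exposes the inequality as a two-point convexity statement, and yields the strictness claim (equality only if $\alpha=0$ or $\theta\in\{0,1\}$) automatically from strict convexity of $\phi$. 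Note also that your fallback of differentiating $D(\alpha) = u(1-\alpha)\log(1-\alpha) - (1-\alpha u)\log(1-\alpha u)$, whose derivative is $u\log\frac{1-\alpha u}{1-\alpha}$, is essentially the paper's proof re-expressed after your reduction, which confirms the two arguments are consistent with each other.
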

\begin{proof}
First, (\ref{lem1-eqn1}) is trivial. To prove (\ref{lem1-eqn2}), note that
for $\alpha = 0$ or $\theta \in \left\{0, 1\right\}$, equality is obvious. Now, fixing
$\theta \in (0, 1)$ and computing.
\eq{
& {\partial \over {\partial \alpha}} \left[H(\theta) - \left[\theta + (1-\theta)(1-\alpha)\right] H\left({\theta \over \theta + (1-\theta)(1-\alpha)}\right)\right] \\
&= (1-\theta) \log {{ 1 - \alpha(1-\theta)} \over {(1-\alpha)(1-\theta)}} \\
&\geq (1-\theta) \log (1-\theta)^{-1}
}
Therefore integrating both sides over $\alpha$ gives,
\eq{
\alpha (1-\theta) \log (1-\theta)^{-1} \leq H(\theta) - \left[\theta + (1-\theta)(1-\alpha)\right] H\left({\theta \over \theta + (1-\theta)(1-\alpha)}\right)
}
as required.
\qedifnone\end{proof}
\fi


\begin{small}
\newcommand{\etalchar}[1]{$^{#1}$}

\end{small}

\end{document}